\DeclarePairedDelimiterX{\infdivx}[2]{(}{)}{%
  #1\;\delimsize|\delimsize|\;#2%
}
\DeclarePairedDelimiter\parentheses{\lparen}{\rparen}
\newcommand{\kld}[2]{\ensuremath{D\infdivx{#1}{#2}}\xspace}
\algrenewcommand{\algorithmiccomment}[1]{\bgroup\hfill//~#1\egroup}
\algrenewcommand{\Return}{\State\textbf{return}\ }
\algnewcommand{\Save}{\State\textbf{save}\ }
\algnewcommand{\Load}{\State\textbf{load}\ }
\newtheorem{theorem}{Theorem}
\newtheorem{proposition}{Proposition}
\DeclareMathOperator*{\argmax}{arg\,max}
\DeclareMathOperator*{\sigmoid}{sigmoid}
\newcommand{\bmu}{\bm{\mu}}
\newcommand{\btheta}{\bm{\theta}}
\newcommand*{\eg}{e.g.\@\xspace}
\newcommand{\bz}{\bm{z}}
\newcommand{\bu}{\bm{u}}
\newcommand{\by}{\bm{y}}
\newcommand{\bomega}{\bm{\omega}}
\newcommand{\grad}[2]{\nabla_{#1}#2}
\newcommand{\id}[1]{\llbracket{#1}\rrbracket}
\DeclareMathOperator{\PrExactlyk}{\mathtt{PrExactlyk}}
\DeclareMathOperator{\SAMPLE}{\mathtt{Sample}}
\DeclareMathOperator{\CONCAT}{\mathtt{Concat}}
\DeclareMathOperator{\ENTROPY}{\mathtt{Entropy}}
\newcommand{\sumnk}{\sigma_{n}^k}
\newcommand{\sumpenk}{\sigma_{n-1}^k}
\newcommand{\sumpenkm}{\sigma_{n-1}^{k-1}}
\newcommand{\imle}{\textsc{I-MLE}\@\xspace}
\definecolor{ourspecialtextcolor}{rgb}{0.528, 0.471, 0.701}
\newcommand{\latentdist}{p_{\btheta}(\bz \mid \sum_i z_i=k )}
\newif\ifcomments
    \newcommand{\kareem}[1]{\textbf{\textcolor{magenta}{{Kareem: #1}}}}
    \newcommand{\zz}[1]{\textbf{\textcolor{orange}{{Zhe: #1}}}}
    \newcommand{\guy}[1]{\textbf{\textcolor{blue}{{Guy: #1}}}}
    \newcommand{\mathias}[1]{\textbf{\textcolor{olive}{{Mathias: #1}}}}
    \newcommand{\kareem}[1]{}
    \newcommand{\zz}[1]{}
    \newcommand{\guy}[1]{}
    \newcommand{\mathias}[1]{}
\DeclareMathOperator{\jvp}{JVP}
\DeclareMathOperator{\onehot}{OneHot}
\DeclareMathOperator{\Softmax}{Softmax}
\DeclareMathOperator{\detach}{detach}
\newcommand{\vv}[1]{\boldsymbol{#1}}
\newcommand{\bZ}{\ensuremath{\vv{Z}}\xspace}
\newcommand{\x}{\ensuremath{\vv{x}}\xspace}
\newcommand{\y}{\ensuremath{\vv{y}}\xspace}
\newcommand{\yhat}{\ensuremath{\hat{\vv{y}}}\xspace}
\newcommand{\z}{\ensuremath{\vv{z}}\xspace}
\newcommand{\X}{\ensuremath{\mathcal{X}}\xspace}
\newcommand{\Y}{\ensuremath{\mathcal{Y}}\xspace}
\newcommand{\Z}{\ensuremath{\mathcal{Z}}\xspace}
\newcommand{\D}{\ensuremath{\mathcal{D}}\xspace}
\newcommand{\Logits}{\ensuremath{\Theta}\xspace}
\newcommand{\bigO}{\ensuremath{\mathcal{O}}\xspace}
\newcommand{\E}{\ensuremath{\mathbb{E}}\xspace}
\newcommand{\R}{\ensuremath{\mathbb{R}}\xspace}
\newcommand{\Loss}{\ensuremath{L}\xspace}
\newcommand{\loss}{\ensuremath{\ell}\xspace}
\newcommand{\logits}{\ensuremath{\vv{\theta}}\xspace}
\newcommand{\vparam}{\ensuremath{\vv{v}}\xspace}
\newcommand{\uparam}{\ensuremath{\vv{u}}\xspace}
\newcommand{\params}{\ensuremath{\vv{\omega}}\xspace}
\newcommand{\encoder}{\ensuremath{h_{\vparam}}\xspace}
\newcommand{\decoder}{\ensuremath{f_{\uparam}}\xspace}
\newcommand{\simple}{\textsc{Simple}\xspace}
\newcommand{\simplefwd}{\textsc{Simple-f}\xspace}
\newcommand{\simplebwd}{\textsc{Simple-b}\xspace}
\newcommand{\ksubset}{\ensuremath{\textstyle\sum_i z_i=k}\xspace}
\newcommand{\unnormp}{\ensuremath{p_{\btheta}\parentheses{\boldsymbol{z}}\xspace}}
\newcommand{\partition}{\ensuremath{p_{\btheta}\parentheses{\ksubset}\xspace}}
\newcommand{\subsetp}{\ensuremath{ p_{\btheta}\parentheses{\boldsymbol{z} \mid \textstyle\sum_i z_i=k}\xspace}}
\title{SIMPLE: A Gradient Estimator \\ for $k$-Subset~Sampling}
\author{Kareem Ahmed\textsuperscript{*1} \quad  Zhe Zeng\textsuperscript{*1} \quad Mathias Niepert\textsuperscript{2} \quad Guy Van den Broeck\textsuperscript{1}\\
\textsuperscript{1}Dept. of Computer Science, UCLA \quad \textsuperscript{2}Dept. of Computer Science, Stuttgart University \\
\texttt{\{ahmedk,zhezeng,guyvdb\}@cs.ucla.edu}%
\\\texttt{mathias.niepert@simtech.uni-stuttgart.de}
}
\begin{document}

\maketitle
\renewcommand{\thefootnote}{\fnsymbol{footnote}}
\footnotetext[1]{These authors contributed equally to this work.}
\vspace{-1.0em}
\begin{abstract}

$k$-subset sampling is ubiquitous in machine learning, enabling regularization and interpretability through sparsity.
The challenge lies in rendering $k$-subset sampling amenable to end-to-end learning.
This has typically involved \emph{relaxing} the \emph{reparameterized} samples to allow for backpropagation, 
with the risk of introducing high bias and high variance.
In this work, we fall back to \emph{discrete} $k$-subset sampling on the forward pass.
This is coupled with using the gradient with respect to the \emph{exact} marginals, computed \emph{efficiently}, as a proxy for the true gradient.
We show that our gradient estimator, \simple, exhibits lower bias and variance compared to state-of-the-art estimators,
including the straight-through Gumbel estimator when $k=1$.
Empirical results show improved performance on learning to explain and sparse linear regression.
We provide an algorithm for computing the \emph{exact} ELBO for the $k$-subset distribution, obtaining \emph{significantly} lower loss compared to SOTA.

\end{abstract}

\section{Introduction}
$k$-subset sampling, sampling a subset of  size $k$ of $n$ variables,
is omnipresent in machine learning. It lies at the core of many 
fundamental problems that rely upon learning sparse features representations of 
input data, including stochastic high-dimensional data visualization~\citep{vandermaaten2009learning},
parametric $k$-nearest neighbors~\citep{grover2019stochastic}, learning to 
explain~\citep{chen2018learning}, discrete variational auto-encoders~\citep{rolfe2017discrete}, and sparse regression, 
to name a few.
All such tasks involve optimizing an expectation of an objective function with respect to a latent 
\emph{discrete} distribution parameterized by a neural network, which are often \emph{assumed} intractable.
Score-function estimators offer a cloyingly simple solution: rewrite the gradient of the
expectation as an expectation of the gradient, which can subsequently be estimated using a
finite number of samples offering an unbiased estimate of the gradient.
Simple as it is, score-function estimators suffer from very high variance which can interfere
with training.
This provided the impetus for other, low-variance, gradient estimators, chief among them are
those based on the reparameterization trick, which allows for biased, but low-variance gradient estimates.
The reparameterization trick, however, does not allow for a direct application to discrete distributions
thereby prompting continuous relaxations, \eg Gumbel-softmax~\citep{jang2017categorical, maddison2017concrete},
that allow for reparameterized gradients w.r.t the parameters of a \emph{categorical} distribution.
Reparameterizable subset sampling~\citep{Sang2019reparameterizable} generalizes the
Gumbel-softmax trick to $k$-subsets which while rendering $k$-subset sampling amenable to
backpropagation at the cost of introducing bias in the learning by using relaxed~samples.

In this paper, we set out with the goal of avoiding all such relaxations.
Instead, we fall back to \emph{discrete} sampling on the forward pass.
On the backward pass, we reparameterize the gradient of the loss function with respect to the 
samples as a function of the \emph{exact} marginals of the $k$-subset distribution.
Computing the exact conditional marginals is, in general, intractable~\citep{Roth96}.
We give an \emph{efficient} algorithm for computing the $k$-subset probability, and show that the 
conditional marginals correspond to  partial derivatives, and are therefore tractable for the $k$-subset
distribution.
We show that our proposed gradient estimator for the $k$-subset distribution, coined \simple, is reminiscent
of the straight-through (ST) Gumbel estimator when $k=1$, with the gradients taken with respect to the 
unperturbed marginals.
We empirically demonstrate that \simple exhibits lower bias \emph{and} variance compared to other known 
gradient estimators, including the ST Gumbel estimator in the case $k=1$. %

We include an experiment on the task of learning to explain (L2X) using the \textsc{BeerAdvocate} 
dataset~\citep{McAuley2012LearningAA}, where the goal is to select the subset of words 
that best explains the model's classification of a user's review.
We also include an experiment on the task of stochastic sparse linear regression, where the goal is to learn
the best sparse model, and show that we are able to recover the Kuramoto–Sivashinsky equation.
Finally, we develop an efficient computation for the calculation of the exact variational evidence lower bound 
(ELBO) for the $k$-subset distribution, which when used in conjunction with \simple leads to state-of-the-art 
discrete sparse VAE learning.

\paragraph{Contributions.}
In summary, we propose replacing \emph{relaxed} sampling on the forward pass with \emph{discrete} sampling.
On the backward pass, we use the gradient with respect to the exact conditional marginals as a proxy for the true gradient, giving an algorithm for computing them efficiently.
We empirically demonstrate that discrete samples on the forward pass, coupled with exact conditional marginals on the backward pass leads to a new gradient estimator, \simple, with lower bias and variance compared to other known gradient estimators. We also provide an efficient computation of the exact ELBO for the $k$-subset distribution, leading to state-of-the-art discrete sparse VAE learning.

\begin{figure}[t!]
    \centering
    \begin{subfigure}[b]{0.49\textwidth}
        \includegraphics[width=\textwidth]{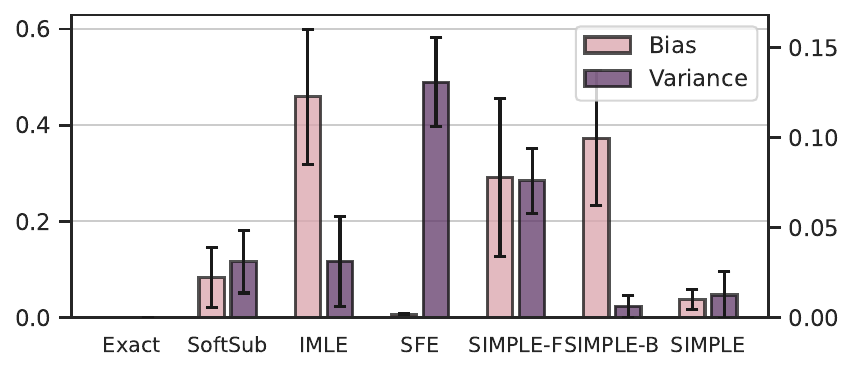}
    \end{subfigure}
    ~ %
    \begin{subfigure}[b]{0.49\textwidth}
        \includegraphics[width=0.94
        \textwidth]{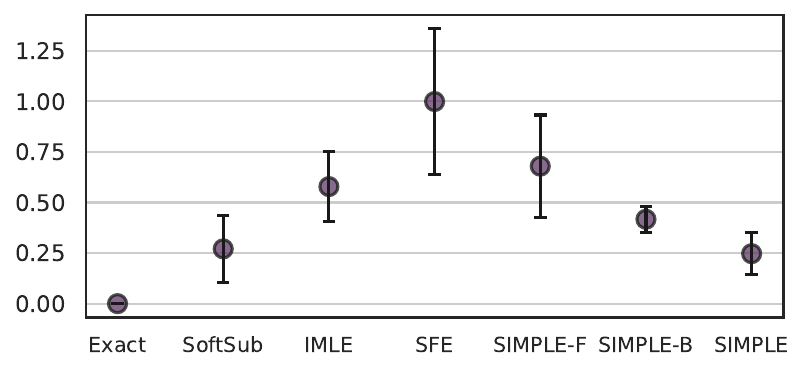}
    \end{subfigure}
    \caption{A comparison of the bias and variance of the gradient estimators (left) and the average and standard deviation of the  cosine distance of a single-sample gradient estimate to the exact gradient. We used the cosine distance, defined as ($1 -$ cosine similarity), in place of the euclidean distance as we only care about the direction of the gradient, not magnitude. The bias, variance and error were estimated using a sample of size 10,000. The details of this experiment are provided in \cref{exp:synthetic}.
    }\label{fig:error-bias-variance}
\end{figure}

\section{Problem Statement and Motivation}\label{sec:problem}

We consider models described by the equations 
\begin{equation}\label{eqn:pipeline}
    \logits = \encoder(\x), \qquad \z \sim \subsetp, \qquad \yhat = \decoder(\z, \x),
\end{equation}
where $\x \in \X$ and $\yhat \in \Y$ denote feature inputs and target outputs, respectively, $\encoder: \X \rightarrow
\Logits$ and $\decoder: \Z \times \X \rightarrow \Y$ are smooth, parameterized maps and 
$\btheta$ are logits
inducing a distribution over the latent binary vector $\z$.
The induced distribution $\unnormp$ is defined~as
\begin{equation}
\label{eq: unconstrained distribution}
    \unnormp = \prod_{i=1}^n p_{\theta_i}(z_i), \text{~~with~~} p_{\theta_i}(z_i=1) = \sigmoid(\theta_i) 
    \text{~~and~~}
    p_{\theta_i}(z_i=0) = 1 - \sigmoid(\theta_i).
\end{equation}
The goal of our stochastic latent layer is \emph{not} to simply sample from $p_{\btheta}(\z)$, which would yield
samples with a Hamming weight between $0$ and $n$ (i.e., with an arbitrary number of ones). Instead, we are interested in sampling from the distribution restricted to samples with a Hamming weight of $k$, for any given $k$. That is, we are interested in sampling from the conditional distribution $\subsetp$.

\begin{figure}[t!]
    \centering
    \includegraphics[width=0.8\textwidth]{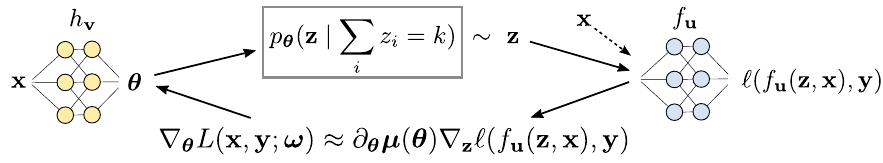}
    \caption{
    The problem setting considered in our paper. On the forward pass, a neural network $h_v$ outputs $\btheta$ parameterizing a \emph{discrete} distribution over subsets of size $k$ of $n$ items, i.e., the $k$-subset distribution. We sample exactly, and efficiently, from this distribution, and feed the samples to a downstream neural network. On the backward pass, we approximate the true gradient
    by the product of the derivative of marginals and the gradient of the sample-wise loss.
    }\label{fig:problem-setting}
\end{figure}

Conditioning the distribution $p_{\btheta}(\z)$ on this \emph{$k$-subset constraint} introduces intricate dependencies
between each of the $z_i$'s. The probability of sampling any given $k$-subset vector $\z$, therefore, becomes
\begin{equation*}
    \subsetp =
    \unnormp/\partition \cdot \id{\ksubset}
\end{equation*}
where $\id{\cdot}$ denotes the indicator function.
In other words, the probability of sampling each $k$-subset is re-normalized by $p_{\btheta}\left(\sum_i z_i=k\right)$ -- the probability 
of sampling exactly $k$ items from the \emph{unconstrained} distribution induced by encoder $\encoder$.
The quantity $\partition = \sum_\mathbf{z} p_\theta\left(\mathbf{z}\right) \cdot \id{\sum_i z_i=k}$
appears to be intractable. We show that not to be the case, providing a tractable algorithm for computing it.%

Given a set of samples $\D$,
we are concerned with learning the parameters $\params = (\vparam, \uparam)$ of the architecture in \eqref{eqn:pipeline} through minimizing the training error $\Loss$, which is the expected loss:
\begin{equation}\label{eqn:Loss}
    \Loss(\x, \y; \params) = \E_{\z \sim p_{\logits}(\z \mid \sum_i z_i=k)}[\loss(\decoder(\z,\x), \y)] \qquad \textit{~~with~~} \logits = \encoder(\x),
\end{equation}
\begin{wraptable}[7]{r}{0.59\textwidth}
    \centering
  \resizebox{0.59\textwidth}{!}
  {
   \begin{tabular}{c c c c}
    \toprule
    \textbf{\textsc{Task}} & \textbf{\textsc{Map} $\encoder$} & \textbf{\textsc{Map} $\decoder$} & \textbf{\textsc{Loss} $\loss$}  \\
    \midrule
    Discrete VAE (Sec.~\ref{section-experiment-vae}) & Encoder & Decoder & ELBO \\
    Learn To Explain (Sec.~\ref{sec: l2x})  & Embedding & Regression & RMSE \\
    Sparse Regression (Sec.~\ref{sec: sparse linear regression})  & Identity & Linear Regression & RMSE \\
    \bottomrule
    \end{tabular}
   }
    \caption{Architectures of the three experiment settings. 
   }
    \label{tab:tasks}
\end{wraptable}
where $\loss: \Y \times \Y \rightarrow \R^+$ is a point-wise loss function.
This formulation, illustrated in Figure~\ref{fig:problem-setting}, is general and subsumes many settings.
Different choices of mappings $\encoder$ and $\decoder$, and sample-wise loss $\loss$ define various tasks. 
Table~\ref{tab:tasks} presents some example settings used in our experimental evaluation.

Learning then requires computing the gradient of $L$ w.r.t.\ $\params = (\vparam, \uparam)$. 
The gradient of $L$ w.r.t.\ $\uparam$ is
\begin{equation}\label{eqn:grad-L-dec}
    \nabla_{\uparam} \Loss(\x, \y; \params) = \E_{\z \sim \latentdist}[\partial_{\uparam} \decoder(\z, \x)^\top \nabla_{\yhat} \loss(\yhat, \y)],
\end{equation}
where $\hat{y} = \decoder(\z, \x)$ is the decoding of a latent sample $\z$.
Furthermore, the gradient of $L$ w.r.t.\ $\vparam$~is
\begin{equation}\label{eqn:grad-L-enc}
    \nabla_{\vparam} \Loss(\x, \y; \params) = \partial_{\vparam} \encoder(\x)^\top \nabla_{\logits} \Loss(\x, \y; \params),
\end{equation}
where $\nabla_{\logits} \Loss(\x, \y; \params) := \nabla_{\logits} \E_{\z \sim p_{\logits}(\z \mid \sum_i z_i=k)}[\loss(\decoder(\z, \x), \yhat)]$, the loss' gradient w.r.t.\ the encoder.

One challenge lies in computing the expectation in \eqref{eqn:Loss} and \eqref{eqn:grad-L-dec}, which has no known closed-form solution.
This necessitates a Monte-Carlo estimate through sampling from $\latentdist$.

A second, and perhaps more substantial hurdle lies in computing $\nabla_{\logits} \Loss(\x, \y; \params)$ in \eqref{eqn:grad-L-enc} due to the non-differentiable nature of discrete sampling.
One could rewrite $\nabla_{\logits} \Loss(\x, \y; \params)$ as
\begin{equation*}
    \nabla_{\logits} \Loss(\x, \y; \params) 
        = \E_{\z \sim p_\theta\left(\mathbf{z} \mid \sum_i z_i=k\right)}[\loss(\decoder(\z, \x), \y) \nabla_{\logits} \log p_\theta\left(\mathbf{z} \mid \textstyle\sum_i z_i=k\right)]
\end{equation*}
which is known as the REINFORCE estimator~\citep{williams1992reinforce}, or the score function estimator (SFE).
It is typically avoided due to its notoriously high variance, despite its apparent simplicity.
Instead, typical approaches~\citep{Sang2019reparameterizable, plotz2018nnn} reparameterize the samples as a deterministic transformation of the parameters, and some independent standard Gumbel noise, and relaxing the deterministic transformation, the top-$k$ function in this case, to allow for backpropagation.

\section{SIMPLE: Subset Implicit Likelihood Estimation}\label{sec:SIMPLE}

Our goal is to build a gradient estimator for $\nabla_{\logits} \Loss(\x, \y; \params)$.
We start by envisioning a hypothetical sampling-free architecture, where the downstream neural network $\decoder$ is a function of the marginals, $\bmu \coloneqq \mu(\btheta)\coloneqq\{p_{\btheta}\parentheses{z_j \mid \textstyle\sum_i z_i=k}\}_{j=1}^n$, instead of a discrete sample $\boldsymbol{z}$, resulting in a loss $L_m$ s.t.\ 
\begin{equation}\label{domke}
    \grad{\btheta}{L_m(\x, \y; \params)} 
    = \partial_{\btheta} \mu(\btheta)^{\top} \grad{\bmu}{\ell_m(f_{\bu}(\bmu, \x), \y)}. %
\end{equation}
When the marginals $\mu(\btheta)$ can be efficiently computed and differentiated, such a hypothetical pipeline can be trained end-to-end.
Furthermore, \citet{domke2010implicit} observed that, 
for an arbitrary loss function $\ell_m$ 
defined on the marginals, the Jacobian
of the marginals w.r.t.\ the logits is symmetric, i.e.\
\begin{equation}\label{domke}
    \grad{\btheta}{L_m(\x, \y; \params)}
    = \partial_{\btheta} \mu(\btheta)^{\top} \grad{\bmu}{\ell_m(f_{\bu}(\bmu, \x), \y)} = \partial_{\btheta}\mu(\btheta) \grad{\bmu}{\ell_m(f_{\bu}(\bmu, \x), \y)}.
\end{equation}
Consequently, computing the gradient of the loss w.r.t.\ the logits, $\nabla_{\logits} \Loss_m(\x, \y; \params)$, 
reduces to computing the \emph{directional derivative}, or the Jacobian-vector product, of the marginals  w.r.t.\ the logits in the direction of the gradient of the loss. %
This offers an alluring opportunity: the conditional marginals characterize the probability of each
$z_i$ in the sample, and could be thought of as a differentiable proxy for the samples.
Specifically, by reparameterizing $\bz$ as a function of the conditional marginal 
$\bmu$ under approximation $\partial_{\bmu} \bz \approx \mathbf{I}$ as proposed by~\citet{niepert2021implicit},
and using the straight-through estimator 
for the gradient of the sample w.r.t.\ the marginals on the backward pass, 
we approximate our true $\nabla_{\logits} \Loss(\x, \y; \params)$ as
\begin{equation}
    \grad{\btheta}{L(\x, \y; \bomega)}
    \approx \partial_{\btheta} \mu(\btheta) \grad{\z}{L(\x, \y; \bomega)},
\end{equation}
where the directional derivative of the marginals can be taken along \emph{any downstream gradient},
rendering the whole pipeline end-to-end learnable, even in the presence of non-differentiable sampling.

Now, estimating the gradient of the loss w.r.t.\ the parameters can be thought of as decomposing 
into two sub-problems:
\textbf{(P1)} Computing the derivatives of conditional marginals~$\partial_{\btheta} \mu(\btheta)$,
which requires the computation of the conditional marginals, and \textbf{(P2)} Computing the gradient of the 
loss w.r.t.\ the samples $\grad{\z}{L(\x, \y; \bomega)}$ using sample-wise loss, which requires drawing exact samples.
These two problems are complicated by conditioning on the $k$-subset constraint, which
introduces intricate dependencies to the distribution, and is infeasible to solve naively, e.g.\ by enumeration.
We will show simple, efficient, and exact solutions to each problem, at the heart of which
is the insight that we need not care about the variables' order, only their sum, introducing 
symmetries that simplify the problem.

\subsection{Derivatives of Conditional Marginals}
In many probabilistic models, marginal inference is \#P-hard~\citep{Roth96,zeng2020probabilistic}.
However, we observe that it is not the case for the $k$-subset distribution.
We notice that the conditional marginals correspond to the partial derivatives of the log-probability of the $k$-subset constraint.
To see this, note that the derivative of a multi-linear function with respect to a single variable retains all the terms referencing that variable, and drops all other terms; this corresponds exactly to the unnormalized conditional marginals.
By taking the derivative of the log-probability, this introduces the $k$-subset probability in the denominator, leading to the \emph{conditional} marginals.
Intuitively, the rate of change of the $k$-subset probability w.r.t.\ a variable only depends on that variable through its length-$k$ subsets. %
\begin{theorem}
\label{thm: marginal}
Let $p_{\btheta}(\sum_j z_j=k)$ be the probability of exactly-$k$ of the unconstrained distribution parameterized by logits $\btheta$. Let $\alpha_i \coloneqq \log p_{\btheta}(z_i)$ denote the log marginals. For every variable $Z_i$, its conditional marginal is
\begin{equation} \label{thisequation?}
    p_{\btheta}\!\left(z_i \mid \textstyle\sum_j z_j=k\right)\! = \frac{\partial}{\partial \alpha_i} \log p_{\btheta}(\textstyle\sum_j z_j=k).
\end{equation}
\end{theorem}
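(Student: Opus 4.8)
The plan is to differentiate $\log p_\btheta(\sum_j z_j = k)$ by hand, exploiting the fully factorized form of the unconstrained distribution in \cref{eq: unconstrained distribution}. First I would put the constraint probability into exponential-family form: since $p_{\theta_i}(z_i) = e^{\theta_i z_i}/(1+e^{\theta_i})$,
\begin{equation*}
  p_\btheta\Bigl(\textstyle\sum_j z_j = k\Bigr)
  \;=\; \frac{1}{\prod_{j}(1+e^{\theta_j})}\sum_{\bz \,:\, \sum_j z_j = k} e^{\langle \btheta,\, \bz \rangle}
  \;=\; \frac{W(\btheta)}{\prod_{j}(1+e^{\theta_j})},
\end{equation*}
where $W(\btheta) := \sum_{\bz \,:\, \sum_j z_j = k} e^{\langle \btheta,\, \bz \rangle}$ is the unnormalized weight of the constraint event. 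As a function of $\psi_j := e^{\theta_j}$, $W$ is precisely the $k$-th elementary symmetric polynomial $e_k(\psi_1,\dots,\psi_n)$ --- a \emph{multilinear} polynomial, since each item is either in or out of a given $k$-subset. This is the multilinear function hinted at above: differentiating in $\theta_i$ brings down a factor $z_i$ inside the sum, i.e.\ it keeps exactly the monomials of $W$ that reference item $i$ and discards the rest.

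The computation is then short. From $\log p_\btheta(\sum_j z_j = k) = \log W(\btheta) - \sum_j \log(1+e^{\theta_j})$,
\begin{equation*}
  \frac{\partial}{\partial \theta_i}\log W(\btheta)
  \;=\; \frac{1}{W(\btheta)}\sum_{\bz \,:\, \sum_j z_j = k} z_i\, e^{\langle\btheta, \bz\rangle}
  \;=\; \frac{\sum_{\bz \,:\, \sum_j z_j = k,\; z_i = 1} e^{\langle\btheta, \bz\rangle}}{\sum_{\bz \,:\, \sum_j z_j = k} e^{\langle\btheta, \bz\rangle}}
  \;=\; p_\btheta\Bigl(z_i = 1 \;\Big|\; \textstyle\sum_j z_j = k\Bigr),
\end{equation*}
the per-coordinate normalizers cancelling in the conditional. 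The cleaner way to see the same thing is to note that $p_\btheta(\bz \mid \sum_j z_j = k)$ is a constrained exponential family with natural parameter $\btheta$, sufficient statistic $\bz$, and log-partition function $\log W(\btheta)$, so the standard identity $\nabla_\btheta \log W(\btheta) = \mathbb{E}[\bz] = \bmu$ yields all $n$ conditional marginals at once --- and incidentally exhibits $\bmu$ as the gradient of a scalar, which is why its Jacobian $\partial_\btheta \mu(\btheta)$ is symmetric, recovering the property noted by \citet{domke2010implicit} earlier.

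The only non-routine point, which I would flag, is the bookkeeping of the per-coordinate normalizers: differentiating $\sum_j \log(1+e^{\theta_j})$ contributes an extra $-\,\sigma(\theta_i)$, the \emph{unconditional} marginal of $z_i$. Hence the identity holds exactly as written when $p_\btheta(\sum_j z_j = k)$ is read as the unnormalized weight $W(\btheta)$ --- equivalently, when one works with the log-potentials $\btheta$ directly rather than re-normalizing each Bernoulli --- which is the natural object here, since the conditional $k$-subset distribution depends on $\btheta$ only through $W$; under the literal normalized reading of \cref{eq: unconstrained distribution} one instead gets $\partial_{\theta_i}\log p_\btheta(\sum_j z_j = k) = p_\btheta(z_i = 1 \mid \sum_j z_j = k) - \sigma(\theta_i)$, with the clean form $p_\btheta(z_i \mid \sum_j z_j = k) = \partial_{\theta_i}\log W(\btheta)$. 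Once $W(\btheta)$ can be evaluated --- the job of the separate tractable-inference algorithm promised earlier --- the marginals and their derivatives follow immediately, so the real weight of the contribution sits in that algorithm rather than in this identity.
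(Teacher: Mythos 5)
Your proof is correct, and its core step is the same as the paper's: write $p_{\btheta}(\sum_j z_j = k)$ as a sum over $k$-subsets, use multilinearity so that differentiating in $\theta_i$ retains exactly the terms with $z_i = 1$, and divide by the constraint probability to obtain the conditional marginal. Where you differ is in the bookkeeping of the per-coordinate normalizers, and your treatment is the more careful one. The paper's appendix proof sidesteps the issue by reparameterizing with $p_{\theta_j}(z_j=1)=\exp(\theta_j)$ and stipulating that $p_{\theta_j}(z_j=0)$ is constant w.r.t.\ $\theta_j$ (a stop-gradient, relegated to a footnote), under which the identity holds literally as stated. You instead keep the sigmoid parameterization of the problem statement, factor the constraint probability as $W(\btheta)/\prod_j(1+e^{\theta_j})$ with $W$ the unnormalized ($k$-th elementary symmetric) weight, prove the clean identity $\partial_{\theta_i}\log W(\btheta) = p_{\btheta}(z_i=1 \mid \sum_j z_j = k)$, and correctly flag that under the literal normalized reading one picks up the extra term $-\sigmoid(\theta_i)$. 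That caveat is precisely the assumption the paper makes implicitly, so your version makes the theorem's intended reading explicit rather than leaving it to a footnote. The exponential-family framing also buys you, at no extra cost, the interpretation $\bmu = \nabla_{\btheta}\log W(\btheta)$ and hence the symmetry of $\partial_{\btheta}\mu(\btheta)$ that the paper imports from Domke's observation.
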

We refer the reader to the appendix for a detailed proof of the above theorem.
To establish the tractability of the above computation of the conditional marginals,
we need to show that the probability of the exactly-$k$ constraint $\partition$ can be obtained tractably, which we demonstrate next.

\begin{figure}[t]
\scalebox{0.95}
{
\begin{minipage}[t]{\dimexpr.5\textwidth-.5\columnsep}
\begin{algorithm}[H]
\begin{algorithmic}
    \State \hskip-1em\textbf{Input}: The logits $\btheta$ of the distribution, the number of variables $n$, and the subset size $k$
    \State \hskip-1em\textbf{Output}: $\partition$%
    \\\textcolor{ourspecialtextcolor}{// $a[i, j] = p_{\btheta}(\sum_{m=1}^i z_{m} = j)$ for all $i, j$}
    \State initialize $a$ to be 0 everywhere
    \State $a[0, 0] = 1$ \textcolor{ourspecialtextcolor}{// $p_{\btheta}(\sum_{m=1}^0 z_m = 0) = 1$} %
    \For{$i=1$ \textbf{to} $n$}
        
        \For{$j=0$ \textbf{to} $k$}
        \State \textcolor{ourspecialtextcolor}{// cf.\ constructive proof of Prop.~\ref{prop: probability exactly k}}
        \State $a[i, j] = a[i - 1, j ] \cdot p_{\theta_{i}}(z_{i} = 0)$ 
        \State \quad $+\ a[i - 1, j - 1 ] \cdot p_{\theta_{i}}(z_{i} = 1)$
        \EndFor
    \EndFor
    \vspace{-0.8mm}
    \State \textbf{return} $a[n, k]$
\end{algorithmic}
\caption{$\PrExactlyk(\btheta, n, k)$}
\label{algo:Marginals}
\end{algorithm}
\end{minipage}}\hfill%
\scalebox{0.95}
{
\begin{minipage}[t]{\dimexpr.5\textwidth-.5\columnsep}
\begin{algorithm}[H]
\begin{algorithmic}
    \State \hskip-1em\textbf{Input}: The logits $\btheta$ of the distribution, the number of variables $n$, and the subset size $k$
    \State  \hskip-1em\textbf{Output}: $\z = (z_1, \ldots, z_n) \sim \latentdist$
    \State sample $=$ [ ], $j = k$
    \For{$i=n$ \textbf{to} $1$}
    \State \textcolor{ourspecialtextcolor}{// cf. proof of Prop.~\ref{prop: exact k sampling}}
    \State $p = a[i - 1, j - 1]$
    \State $z_i \sim \text{Bernoulli}(p \cdot p_{\theta_{i}}(z_{i} = 1) / a[i, j ])$\\
    \hskip1.2em\textcolor{ourspecialtextcolor}{// Pick next state based on value of sample}
    \State \textbf{if} $z_i = 1$ \textbf{then} $j = j-1$
    \State sample.append($z_i$)
    \EndFor
    \vspace{1.1mm}
    \State \textbf{return} sample
\end{algorithmic}
\caption{$\SAMPLE(\btheta, n, k)$}
\label{algo:Sample}
\end{algorithm}
\end{minipage}
}
\end{figure}

\begin{proposition}
\label{prop: probability exactly k}
The probability $p_{\btheta}\left(\sum_i z_i=k\right)$ of sampling exactly $k$ items from the \emph{unconstrained} 
distribution $p_{\btheta}(\bz)$ over $n$ items as in Equation~\ref{eq: unconstrained distribution} can be computed exactly in time $\bigO(nk)$. 
\end{proposition}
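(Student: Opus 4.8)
The plan is to prove the proposition constructively, exhibiting the $\bigO(nk)$ algorithm (Algorithm~\ref{algo:Marginals}) and verifying its correctness and running time. The core object is a dynamic-programming table over prefixes of the variables: define
\[
a[i,j] \;\coloneqq\; p_{\btheta}\!\left(\textstyle\sum_{m=1}^{i} z_m = j\right), \qquad 0 \le i \le n,\ \ 0 \le j \le k,
\]
the probability that exactly $j$ of the first $i$ variables equal $1$ under the unconstrained product distribution $\unnormp$. The quantity we want is then $\partition = a[n,k]$, so it suffices to show this table can be filled in time $\bigO(nk)$.

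First I would record the base case: the empty sum equals $0$ with certainty, so $a[0,0]=1$ and $a[0,j]=0$ for $j\ge 1$. The key step is the recurrence
\[
a[i,j] \;=\; a[i-1,\,j]\cdot p_{\theta_i}(z_i=0) \;+\; a[i-1,\,j-1]\cdot p_{\theta_i}(z_i=1),
\]
with the convention $a[\cdot,-1]=0$. This is justified by conditioning on the value of $z_i$ and invoking the mutual independence of $z_1,\dots,z_n$ under $\unnormp$ (Equation~\ref{eq: unconstrained distribution}): the event $\{\sum_{m=1}^{i} z_m = j\}$ is the disjoint union of $\{z_i=0\}\cap\{\sum_{m=1}^{i-1} z_m = j\}$ and $\{z_i=1\}\cap\{\sum_{m=1}^{i-1} z_m = j-1\}$, and independence factors each of these joint probabilities into $p_{\theta_i}(z_i)$ times an $a[i-1,\cdot]$ term. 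A straightforward induction on $i$ then shows that every entry produced by the algorithm equals the claimed probability, and in particular $a[n,k]=\partition$.

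For the complexity bound, the table has $(n+1)(k+1)=\bigO(nk)$ entries (columns beyond $k$ are never referenced, since the recurrence for column $j$ uses only columns $j$ and $j-1$), and each entry is computed with a constant number of arithmetic operations; the values $p_{\theta_i}(z_i=1)=\sigmoid(\theta_i)$ can be evaluated once per $i$. Hence the total running time is $\bigO(nk)$, establishing the proposition. I do not anticipate a real obstacle here — this is essentially the textbook Poisson-binomial recursion — so the only care needed is to state the independence step cleanly and to handle the boundary indices ($j=0$ and $j-1<0$) uniformly so the induction has no special cases.
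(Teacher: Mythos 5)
Your proof is correct and follows essentially the same route as the paper: the paper's own constructive proof builds the identical prefix-sum recurrence (justified by the partition theorem, i.e., conditioning on $z_i$) with the same base cases, yielding Algorithm~\ref{algo:Marginals} and the $\bigO(nk)$ bound. Your added care about independence, the $j-1<0$ boundary convention, and the per-entry cost is a welcome but minor elaboration of the same argument.
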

\begin{proof}
Our proof is constructive.
As a base case, consider the probability of sampling $k=-1$ out of $n = 0$ items. We can see that the probability of such an event is $0$.
As a second base case, consider the probability of sampling $k=0$ out of $n = 0$ items. We can see that the probably of such an event is $1$.
Now assume that we are given the probability $p_{\btheta}\left(\sum_i^{n-1} z_i=k'\right)$, for $k' = 0, \ldots, k$, and we are interested in computing 
$p_{\btheta}\left(\sum_i^n z_i=k\right)$.
By the partition theorem, we can see that
\begin{equation*}
    p_{\btheta}\left(\textstyle\sum_i^{n} z_i=k\right) = p_{\btheta}\left(\textstyle\sum_i^{n-1} z_i=k\right)\cdot p_{\theta_n}(z_n=0) +  p_{\btheta}\left(\textstyle\sum_i^{n-1} z_i=k-1\right)\cdot p_{\theta_n}(z_n=1)
\end{equation*}
as events $\sum_i^{n-1} z_i=k$ and $\sum_i^{n-1} z_i=k-1$ are disjoint and, for any $k$, partition the sample space. Intuitively, for any $k$ and $n$, we can sample $k$ out of $n$ items by choosing $k$ of $n-1$ items, and not the $n$-th item, or choosing $k-1$ of $n-1$ items, and the $n$-th item. 
The above process gives rise to Algorithm \ref{algo:Marginals}, which returns $p_{\btheta}\left(\sum_i z_i=k\right)$ in time $\bigO(nk)$.
\end{proof}
By the construction described above, we obtain a closed-form $p_{\btheta}\!\left(\textstyle\sum_i^{n} z_i=k\right)$, which allows us to compute conditional marginals $p_{\btheta}(z_i \mid \textstyle\sum_j z_j=k)$ by Theorem~\ref{thm: marginal} 
via auto-differentiation.
This further allows the computation of the derivatives of conditional marginals $\partial_{\btheta} \mu(\btheta)_i = \partial_{\btheta}~ p_{\btheta}(z_i\mid\textstyle\sum_j z_j=k)$ to be amenable to auto-differentiation, solving problem \textbf{(P1)} exactly and efficiently.

\subsection{Gradients of Loss w.r.t. Samples}
As alluded to in \cref{sec:SIMPLE}, we approximate $\grad{\btheta}{L(\x, \y; \bomega)}$ by the directional derivative of the marginals along the gradient of the loss w.r.t.\ discrete samples $\z$, $\grad{\z}{L(\x, \y; \bomega)}$, where $\z$ is drawn from the $k$-subset distribution $p_{\btheta}(\bz \mid \ksubset)$.
What remains is to estimate the value of the loss, necessitating faithful
sampling from the $k$-subset distribution, which might initially appear daunting.
\paragraph{Exact $k$-subset Sampling}
Next we show how to sample exactly from the $k$-subset distribution $\subsetp$.
We start by sampling the variables in reverse order, that is, we sample
$z_n$ through $z_1$. The main intuition being that, having sampled $(z_n, z_{n-1}, \cdots, z_{i+1})$ with a Hamming weight
of $k - j$, we sample $Z_i$ with a probability of choosing $k-j$ of $n-1$ variables \emph{and} the $n$-th variable
\emph{given that} we choose $k-j+1$ of $n$ variables. We formalize our intuition below. 

\begin{proposition}
\label{prop: exact k sampling}
Let $\SAMPLE$ be defined as in Algorithm~\ref{algo:Sample}. Given $n$ random variables $Z_1, \cdots, Z_n$, a subset size $k$, and a $k$-subset distribution $\subsetp$  parameterized by log probabilities $\logits$, Algorithm~\ref{algo:Sample} draws exact samples from $\subsetp$ in time  $\bigO(n)$.
\end{proposition}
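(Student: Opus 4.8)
The plan is to show that Algorithm~\ref{algo:Sample} implements, exactly, the chain rule for $\subsetp$ applied to the variables in the reverse order $z_n, z_{n-1}, \dots, z_1$, so that the product of the Bernoulli draws it makes equals $\unnormp / \partition$ on the weight-$k$ outcomes and $0$ elsewhere; the $\bigO(n)$ bound then follows by inspecting the loop.

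First I would expand the target as a telescoping product,
\begin{equation*}
  \subsetp = \prod_{i=n}^{1} p_{\btheta}\!\left(z_i \mid z_{i+1}, \dots, z_n, \textstyle\sum_j z_j = k\right),
\end{equation*}
valid for every $\bz$ in the support. The key step --- and the one I expect to be the main obstacle --- is a Markov-type reduction of each factor. Because $\unnormp$ is a product of independent Bernoullis, once the suffix $z_{i+1}, \dots, z_n$ is fixed, the global event $\sum_j z_j = k$ is equivalent to the prefix event $\sum_{m=1}^i z_m = j$ with $j \coloneqq k - \sum_{m>i} z_m$ --- exactly the ``remaining budget'' the algorithm carries --- and $z_i$ is independent of $z_1, \dots, z_{i-1}$; hence
\begin{equation*}
  p_{\btheta}\!\left(z_i \mid z_{i+1}, \dots, z_n, \textstyle\sum_j z_j = k\right) = p_{\btheta}\!\left(z_i \mid \textstyle\sum_{m=1}^i z_m = j\right).
\end{equation*}
I would then evaluate the reduced conditional by Bayes' rule,
\begin{equation*}
  p_{\btheta}\!\left(z_i = 1 \mid \textstyle\sum_{m=1}^i z_m = j\right) = \frac{p_{\theta_i}(z_i=1)\, p_{\btheta}\!\left(\textstyle\sum_{m=1}^{i-1} z_m = j-1\right)}{p_{\btheta}\!\left(\textstyle\sum_{m=1}^i z_m = j\right)} = \frac{a[i-1, j-1]\, p_{\theta_i}(z_i = 1)}{a[i, j]},
\end{equation*}
which is precisely the Bernoulli parameter used in Algorithm~\ref{algo:Sample}, with the update ``if $z_i=1$ then $j=j-1$'' maintaining the budget invariant. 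The boundary conventions take care of themselves: $a[\cdot,-1]=0$ forces $z_i=0$ once the budget is spent, the entries $a[i,j]$ with $j>i$ vanish so a forced one receives probability $1$ when every remaining variable must be chosen, and the loop therefore necessarily exits with $j=0$ on exactly the weight-$k$ outcomes; off the support some factor in the product is zero. Multiplying the factors back together, the $a[\cdot,\cdot]$ terms telescope and leave $\unnormp / \partition$, so the induced law is $\subsetp$.

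For the runtime, by Proposition~\ref{prop: probability exactly k} the table $a$ is built in $\bigO(nk)$ as shared preprocessing (it also supplies the marginals of Theorem~\ref{thm: marginal}), and the sampling loop then performs $n$ iterations, each doing $\bigO(1)$ lookups, multiplications, and one Bernoulli draw; hence each sample costs $\bigO(n)$. I would note explicitly that, with the table precomputed once, this $\bigO(n)$ is the marginal cost of drawing each sample, as claimed.
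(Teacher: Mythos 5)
Your proof is correct and follows essentially the same route as the paper's: decompose $\subsetp$ by the chain rule over $z_n,\dots,z_1$ and verify via Bayes' theorem that the Bernoulli parameter $a[i-1,j-1]\,p_{\theta_i}(z_i=1)/a[i,j]$ equals the true conditional probability of $z_i$ given the already-sampled suffix and the $k$-subset constraint. Your version is if anything slightly more explicit than the paper's (which phrases the same Bayes argument in terms of suffix-sum events conditioned on $\sum_m z_m = k$), and your handling of the boundary cases and the per-sample $\bigO(n)$ cost given the precomputed table matches the intended reading of the proposition.
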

\begin{proof}
Assume that variables $Z_n, \cdots, Z_{i+1}$ are sampled and have their values to be $z_n, \cdots, z_{i+1}$ with $\sum_{m = i + 1}^n z_m = k - j$.
By Algorithm~\ref{algo:Sample} we have that the probability with which to sample $Z_i$ is
\begin{align*}
    p_{\SAMPLE}(z_i = 1 &\mid z_n, \cdots, z_{i+1}) 
    = \frac{p_{\btheta}(\sum_{m = i}^{n} z_m = k - j + 1 \mid \sum_m z_m = k)~p_{\theta_i}(z_i = 1)}{p_{\btheta}(\sum_{m = i + 1}^{n} z_m = k - j \mid \sum_m z_m = k)} \\
    & = \frac{p_{\btheta}(\sum_{m = i + 1}^{n} z_m = k - j \mid z_i = 1, \sum_m z_m = k)~p_{\theta_i}(z_i = 1)}{p_{\btheta}(\sum_{m = i + 1}^{n} z_m = k - j \mid \sum_m z_m = k)} \\
    & = p_{\btheta}(z_i = 1 \mid \textstyle\sum_{m = i+1}^{n} z_m = k-j, \sum_m z_m = k) \textit{~~(by Bayes' theorem)}
\end{align*}
It follows that samples drawn from Algorithm~\ref{algo:Sample} are distributed according to $\subsetp$.
\end{proof}

\section{Connection to Straight-Through Gumbel-Softmax}

One might wonder if our gradient estimator reduces to the Straight-Through (ST) 
Gumbel-Softmax estimator, or relates to it in any way when $k=1$.
On the forward pass, the ST Gumbel Softmax estimator makes use of the Gumbel-Max trick
\citep{Maddison14}, which states that we can efficiently sample from a categorical distribution
by perturbing each of the logits with standard Gumbel noise, and taking the MAP, or more
formally $\z = \onehot(\argmax_{i \in \{1, \ldots, k\}} \btheta_i + g_i)\sim p_{\btheta}$
where the $g_i$'s are i.i.d Gumbel$(0,1)$ samples, and $\onehot$ encodes the sample as
a binary vector.

Since $\argmax$ is non-differentiable, Gumbel-Softmax uses the \emph{perturbed} relaxed samples,
$\y = \Softmax(\btheta + g_i)$ as a proxy for discrete samples $\z$ on the backward
pass, using differentiable $\Softmax$ in place of the non-differentiable $\argmax$,
with the entire function returning $(\z - \y).\detach() + \y$
where $\detach$ ensures that the gradient flows only through the relaxed samples on the backward pass.

\begin{wrapfigure}{r}{0.4\textwidth}
    \vspace{-1mm}
    \includegraphics[clip, width=0.4\textwidth]{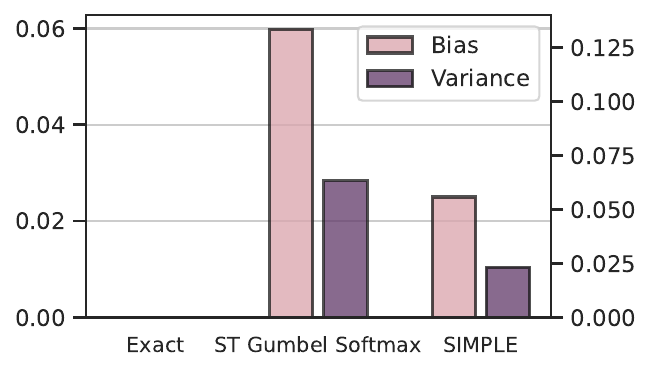}
    \caption{Bias and variance of \simple and Gumbel Softmax over $10$k samples}
    \label{fig:SIMPLE-STGS}
    \vspace{-12mm}
\centering
\label{fig:DVAE}
\end{wrapfigure}
That is, just like \simple, ST Gumbel-Softmax returns \emph{exact}, \emph{discrete} samples.
However, whereas \simple backpropagates through the exact marginals, ST Gumbel Softmax backpropagates through
the \emph{perturbed} marginals that result from applying the Gumbel-max trick. As can be seen in \cref{fig:SIMPLE-STGS}, such a
minor difference means that, empirically, \simple exhibits lower bias and variance compared to
ST Gumbel Softmax while being exactly as efficient.

\algrenewcommand\algorithmicindent{1em}
\algrenewcommand{\algorithmiccomment}[1]{\bgroup\hskip1em\textcolor{ourspecialtextcolor}{//~\textsl{#1}}\egroup}

\begin{algorithm}[t!]
\begin{multicols}{2}
\begin{algorithmic}
      \Function{ForwardPass}{$\btheta$}
      \\ \Comment{$p_{\btheta}(\sum_{m=1}^i z_{m} = j)$ for all $i, j$}
      \State $a = \PrExactlyk(\btheta, n, k)$
      \\ \Comment{Sample from $\subsetp$}
      \State $\z = \SAMPLE(\btheta, n, k)$
      \Save $a$ for the backward pass
       \Return $\z$\\
       \EndFunction
\end{algorithmic}
\begin{algorithmic}
      \Function{BackwardPass}{$\grad{\z}{\ell}(f_{\bu}(\z, \x), \by)$}
      \Load $\btheta$ from the forward pass
      \\ \Comment{derivatives of $\subsetp$}
      \State $\bmu = \grad{\btheta}{\log a[n,k]}$ // by auto-diff
      \\ \Comment{Return the directional derivative of the}
      \\ \Comment{marginals along the downstream gradients}
      \Return $\jvp(\bmu, \grad{\z}{\ell}(f_{\bu}(\z, \x))$

       \EndFunction
\end{algorithmic}
\end{multicols}
\caption{The proposed algorithm for the $k$-subset distribution
}
\label{algo:main}
\vspace{-6mm}
\end{algorithm}

\noindent

\section{Experiments}
We conduct experiments on four different tasks:
1) A synthetic experiment designed to test the bias and variance,
as well as the average deviation of \simple compared to a variety of well-established
estimators in the literature.
2) A discrete $k$-subset Variational Auto-Encoder (DVAE) setting, where the latent
space models a probability distribution over $k$-subsets.
We will show that we can compute the evidence lower bound (ELBO) exactly, and that, coupled
with exact sampling and our \simple gradient estimator, we attain a much lower loss compared
to state of the art in sparse DVAEs.
3) The learning to explain (L2X) setting, where the aim is to
select the $k$-subset of words that best describe the classifier's prediction, where we show an improved mean-squared error, as well as precision, across the board.
4) A novel, yet simple task, sparse linear regression, where, in a vein similar
to L2X, we wish to select a $k$-subset of features that give rise to a linear regression model,
avoiding overfitting the spurious features  present in the data. \cref{tab:tasks}
details the architecture with the objective functions. 
Our code will be made publicly available at \href{https://github.com/UCLA-StarAI/SIMPLE}{github.com/UCLA-StarAI/SIMPLE}.

\subsection{Synthetic Experiments}\label{exp:synthetic}
We carried out a series of experiments with a $5$-subset distribution, and a latent space of dimension $10$. 
We set the loss to $L(\btheta) = \E_{z\sim p_{\btheta}\left(\mathbf{z} \mid \sum_i z_i=k\right)}[\lVert \z - \mathbf{b}\rVert^2]$, where $\mathbf{b}$ is the groundtruth logits sampled from $\mathcal{N}(0,\mathbf{I})$.
Such a distribution is tractable: we only have $\binom{10}{5} = 252$ $k$-subsets, which are easily enumerable and therefore, the exact gradient, the golden standard, can be computed in closed form. 

In this experiment, we are interested in three metrics: bias, variance, and the average error of each gradient estimator, where the latter is measured by averaging the deviation of each single-sample gradient estimate from the exact gradient.
We used the cosine distance, defined as $1 - $ cosine similarity as the measure of deviation in our calculation of the metrics above, as we only care about direction.

We compare against four different baselines: 
\emph{exact}, which denotes the exact gradient;
\emph{SoftSub}~\citep{Sang2019reparameterizable}, which uses an extension of the Gumbel-Softmax trick to sample \emph{relaxed} $k$-subsets on the forward pass;
\imle, which denotes the IMLE gradient estimator ~\citep{niepert2021implicit}, where \emph{approximate} samples are obtained using perturb-and-map (PAM) on the forward pass, approximating the  marginals using PAM samples on the backward pass;
and  score function estimator, denoted \emph{SFE}.

We tease apart \simple's improvements by comparing three different flavors:
\emph{\simplefwd}, which only uses \emph{exact} sampling, falling back to estimating the marginals using \emph{exact} samples; 
\emph{\simplebwd}, which uses \emph{exact} marginals on the backward pass with \emph{approximate} PAM samples on the forward pass;
and \simple, coupling \emph{exact} samples on the forward pass with \emph{exact} marginals on the backward pass. 

\begin{minipage}{\dimexpr.65\textwidth-.5\columnsep}
    \centering
    
        \includegraphics[width=0.48\textwidth]{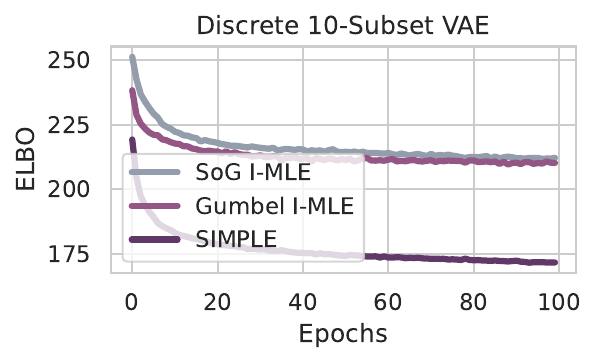}~
        \includegraphics[width=0.48\textwidth]{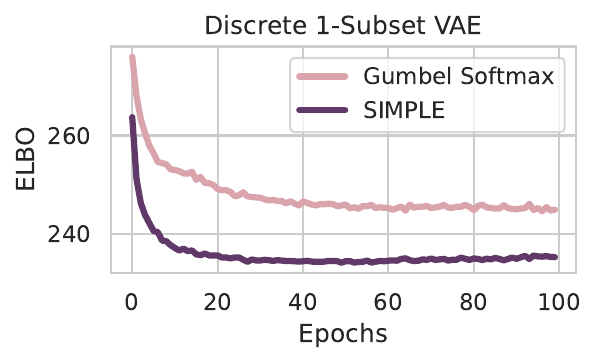}
    
    \captionof{figure}{ELBO against \# of epochs. (Left) Comparison of \simple against different flavors of IMLE on the 10-subset DVAE, and (Right) against ST Gumbel Softmax on the 1-subset DVAE.} %
    \label{fig:DVAE}
\end{minipage}\hfill
\scalebox{0.7}{
\begin{minipage}{\dimexpr.5\textwidth-.5\columnsep}
\begin{algorithm}[H]
\begin{algorithmic}
    \State \hskip-1em\textbf{Input}: The logits $\btheta$ of the distribution, the number of variables $n$, and the subset size $k$
    \State  \hskip-1em\textbf{Output}: $H(\z)=-\mathbb{E}_{\z \sim \latentdist}\!\left[\log p(\z)\right]$
    \State $h = \text{zeros}(n, k)$
    \For{$i=k$ \textbf{to} $n$}
    \For{$j=0$ \textbf{to} $k$}\\
    \hskip2.0em\textcolor{ourspecialtextcolor}{// $p(z_i \mid \sum_{m=1}^i z_m = j)$}
    \State $p = a[i - 1, j - 1] * p_{\theta_{i}}(z_{i} = 1) / a[i, j ]$\\
    \hskip2.0em\textcolor{ourspecialtextcolor}{// cf. proof of Prop.~\ref{prop:entropy} in Appendix}
    \State $h[i, j] = H_b(p) + p * h[i - 1, j] + $ \\
    \qquad\quad\quad\quad\  $(1 - p) * h[i - 1, j + 1]$
    \EndFor
    \EndFor
    \State \textbf{return} $h$
\end{algorithmic}
\caption{$\ENTROPY(\btheta, n, k)$}
\label{algo:Entropy}
\end{algorithm}
\end{minipage}
}

Our results are shown in \cref{fig:error-bias-variance}
As expected, we observe that \emph{SFE} exhibits no bias, but high variance
whereas \emph{SoftSub} suffers from both bias and variance, due to the Gumbel noise injection into the samples to make them differentiable.
We observe that \imle exhibits very high bias, as well as very low variance. This can be attributed to the PAM sampling, which in the case of $k$-subset distribution does not sample faithfully from the distribution, but is instead biased to sampling only the mode of the distribution. 
This also means that, by approximating the marginals using PAM samples, there is a lot less variance to our gradients.
On to our \simple gradient estimator, we see that it exhibits \emph{less bias as well as less variance} compared to all the other gradient estimators.
We also see that each estimated gradient is, on average, much more aligned with the exact gradient.
To understand why that is, we compare \simple, \simplefwd, and \simplebwd.
As hypothesized, we observe that \emph{exact sampling}, \simplefwd, reduces the bias, but increases the variance compared to \imle, this is since, unlike the PAM samples, our exact sample span the entire sample space.%
We also observe that, even compared to \imle, \simplebwd, reduces the variance by marginalizing over all possible samples.

\subsection{Discrete Variational Auto-Encoder}
\label{section-experiment-vae}

Next, we test our \simple gradient estimator in the $k$-subset discrete variational auto-encoder
(DVAE) setting, where the latent variables model a probability distribution over $k$-subsets, 
and has a dimensionality of $20$.
Similar to prior work \citep{jang2017categorical, niepert2021implicit}, the encoding and decoding
functions of the VAE consist of three dense layers (encoding: 512-256-20x20; decoding: 256-512-784).
The DVAE is trained to minimize the sum of reconstruction loss and KL-divergence of the $k$-subset 
distribution and the constrained uniform distribution, known as the ELBO, on MNIST.

In prior work, the KL-divergence was approximated using the unconditional marginals, obtained simply
through a $\Softmax$ layer.
Instead we show that the KL-divergence between the $k$-subset distribution and the uniform distribution
can be computed exactly.
First note that, through simple algebraic manipulations, the KL-divergence between the $k$-subset 
distribution and the constrained uniform distribution can be rewritten as the sum of negative entropy,
$-H(\z)$, where $\z \sim p_{\btheta}\left(\z \mid \sum_i z_i=k\right)$ and log the number of $k$-subsets,
$\log \binom{n}{k}$ (see appendix for details), reducing the hardness of computing the KL-divergence, to computing the entropy of a
$k$-subset distribution, for which~\cref{algo:Entropy} gives a tractable algorithm.
Intuitively, the uncertainty in the distribution over a sequence of length $n$, $k$ of which
are true, decomposes as the uncertainty over $Z_n$, and the average of the uncertainties over
the remainder of the sequence. We refer the reader to the appendix for the  proof of 
the below proposition.
\begin{proposition}
\label{prop:entropy}
Let $\ENTROPY$ be defined as in Algorithm~\ref{algo:Entropy}. Given variables, $Z_1, \cdots, Z_n$, and a $k$-subset distribution $\latentdist$, Algorithm~\ref{algo:Entropy} computes entropy of $p_{\btheta}\left(\z \mid \sum_i z_i=k\right)$.
\end{proposition}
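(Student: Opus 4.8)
The plan is to prove, by induction on the prefix length $i$, that every entry $h[i,j]$ produced by Algorithm~\ref{algo:Entropy} equals the entropy of the prefix $(Z_1,\ldots,Z_i)$ under the unconstrained product distribution of \eqref{eq: unconstrained distribution} conditioned on a fixed count among those first $i$ variables, i.e.\ $h[i,j] = H\!\big((Z_1,\ldots,Z_i)\mid \Sigma_i = c(i,j)\big)$ where $\Sigma_i := \sum_{m=1}^i z_m$ and $c(i,j)$ is the affine index map dictated by the recursion. Specializing this invariant to $i=n$ and the coordinate for which $c(n,\cdot)=k$ then gives $H(\z)=-\E_{\z\sim\latentdist}[\log p(\z)]$, since under the $k$-subset conditional the entropy of $\z$ depends only on the prefix constraint $\Sigma_n=k$. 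The two ingredients are the chain rule of entropy and the mutual independence of the unconstrained Bernoullis.

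For the single-variable term: because the $Z_m$ are independent in the unconstrained model, Bayes' rule (exactly as in the proof of Proposition~\ref{prop: exact k sampling}) yields $P(Z_i=1\mid \Sigma_i=j) = a[i-1,j-1]\,p_{\theta_i}(z_i=1)/a[i,j]$, which is precisely the $p$ computed in the inner loop; here the $a[\cdot,\cdot]$ are the dynamic-programming table of Algorithm~\ref{algo:Marginals}, correct by Proposition~\ref{prop: probability exactly k} (and note $a[i-1,-1]=0$ forces $p=0$ at the boundary $j=0$, and $a[i,j]>0$ for $i\ge k\ge j$ so $p$ is well-defined once the loop starts at $i=k$). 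Hence $H(Z_i\mid \Sigma_i=j)=H_b(p)$, the leading term of the update. For the remaining term, the chain rule expands $H\!\big((Z_1,\ldots,Z_i)\mid\Sigma_i=j\big)$ as $H(Z_i\mid\Sigma_i=j) + H\!\big((Z_1,\ldots,Z_{i-1})\mid Z_i,\Sigma_i=j\big)$, and the second summand averages over the two values of $Z_i$ with weights $p$ and $1-p$. Since $Z_i\perp(Z_1,\ldots,Z_{i-1})$, the event $\{Z_i=z_i,\Sigma_i=j\}$ imposes on the prefix exactly the constraint $\Sigma_{i-1}=j-z_i$, so the two conditional sub-entropies are the corresponding entries of row $i-1$ of $h$ and are correct by the induction hypothesis; this matches the update $h[i,j]=H_b(p)+p\cdot h[i-1,\cdot]+(1-p)\cdot h[i-1,\cdot]$. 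The base case is supplied by the zero-initialization, which encodes that the entropy of the empty (or already fully determined) prefix is $0$.

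The main obstacle --- and essentially the only content beyond routine bookkeeping --- is fixing the affine map $c(i,j)$ between the table coordinate $j$ and the actual prefix count, so that each read of $h[i-1,j]$ and $h[i-1,j+1]$ lands on exactly the sub-entropy the chain rule prescribes, and then checking that the loop bounds (starting at $i=k$) together with the zero-initialization expose precisely the correct base rows while never dereferencing a vanishing $a[i,j]$. Once this correspondence is pinned down, the induction closes immediately, and reading off row $n$ at the coordinate with $c(n,\cdot)=k$ gives the claimed entropy of the $k$-subset distribution.
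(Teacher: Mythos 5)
Your argument's mathematical core is the same as the paper's: both proofs use the chain rule of entropy plus the independence of the unconstrained Bernoullis to decompose the constrained entropy into a binary-entropy term for one variable given the count constraint, plus the expectation (over that variable) of the entropy of the remaining prefix under the residual constraint, with a deterministic base case. The paper states this top-down by peeling off $Z_n$, i.e.\ $H(\bZ)=H_b(Z_n\mid\sumnk)+\E_{z_n}\left[H(\bZ_{:n-1}\mid\sumnk,z_n)\right]$ with base case $k=n$, and never touches the table indices; you recast it as a bottom-up invariant $h[i,j]=H\bigl((Z_1,\ldots,Z_i)\mid\Sigma_i=c(i,j)\bigr)$, which is a reasonable and essentially equivalent organization.

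However, the one step you explicitly defer --- ``fixing the affine map $c(i,j)$'' --- is not routine bookkeeping, and your claim that the induction then ``closes immediately'' does not hold against the algorithm as printed. By identifying $p=a[i-1,j-1]\,p_{\theta_i}(z_i=1)/a[i,j]=P(Z_i=1\mid\Sigma_i=j)$ you have committed to $c(i,j)=j$, and then the chain rule requires the $z_i=1$ branch to read $h[i-1,j-1]$ and the $z_i=0$ branch to read $h[i-1,j]$; the printed update instead reads $h[i-1,j]$ and $h[i-1,j+1]$. No affine reindexing reconciles the two: consistency of the reads forces $c(i-1,j)=c(i,j)-1$ and $c(i-1,j+1)=c(i,j)$, hence $c(i,j)=i+j+\mathrm{const}$, contradicting $c(i,j)=j$. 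So your induction only closes after correcting or reinterpreting the pseudocode's indices (the recursion you actually derived, $h[i,j]=H_b(p)+p\,h[i-1,j-1]+(1-p)\,h[i-1,j]$, is the right one); this is precisely the part the paper's own proof sidesteps by proving only the entropy decomposition and leaving the correspondence to the table implicit.
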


We plot the loss ELBO against the number of epochs, as seen in \cref{fig:DVAE}.
We compared against \imle using sum-of-gamma noise as well as Gumbel noise for PAM
sampling, on the $10$-subset DVAE, and against ST Gumbel Softmax on the $1$-subset DVAE.
We observe a \emph{significantly} lower loss on the test set on the $10$-subset
DVAE, partly attributable to the exact ELBO computation, but also on the $1$-subset DVAE compared to ST 
Gumbel Softmax, where the sole difference is the backward pass.%

\subsection{Learning to Explain}
\label{sec: l2x}
The \textsc{BeerAdvocate} dataset~\citep{McAuley2012LearningAA} consists of free-text reviews and ratings for $4$ different aspects of beer: appearance, aroma, palate, and taste. The training set has 80k reviews for the aspect \textsc{Appearance} and 70k reviews for all other aspects.  
In addition to the ratings for all reviews, each sentence in the test set contains annotations of the words that best describe the review score with respect to the various aspects.
We address the problem introduced by the L2X paper~\citep{chen2018learning} of learning a $k$-subset distribution over words that best explain a given rating.
We follow the architecture suggested in the L2X paper, consisting of four convolutional and one dense layer.

We compare to relaxation-based baselines L2X~\citep{chen2018learning} and SoftSub~\citep{Sang2019reparameterizable} as well as to \imle which uses perturb-and-MAP to both compute an approximate sample in the forward pass and to estimate the marginals. Prior work has shown that the straight-through estimator (STE) did not work well and we omit it here. We used the standard hyperparameter settings of \citet{chen2018learning} and choose the temperature parameter $t \in \{0.1, 0.5, 1.0, 2.0\}$ for all methods. We used the standard Adam settings and trained separate models for each aspect using MSE as point-wise loss $\ell$.
\cref{tb: l2x aroma} lists results for $k \in \{5, 10, 15\}$ for the \textsc{Aroma} aspect.
The mean-squared error (MSE) of \simple is almost always lower and its subset precision never significantly exceeded by those of the baselines. Table~\ref{tb: l2x k=10} shows results on the remaining aspects Appearance, Palate, and Taste for $k=10$.

\begin{table}[t]
\centering
\begin{adjustbox}{max width=\textwidth}
    \begin{tabular}{ccccccc}
    \toprule
    \multirow{2}{*}{\textbf{Method}} & \multicolumn{2}{c}{\textbf{Appearance}} & \multicolumn{2}{c}{\textbf{Palate}} & \multicolumn{2}{c}{\textbf{Taste}} \\
    \cmidrule(lr){2-3}\cmidrule(lr){4-5}\cmidrule(lr){6-7}
    & \textbf{Test MSE} & \textbf{Precision} & \textbf{Test MSE} & \textbf{Precision} & \textbf{Test MSE} & \textbf{Precision} \\
    \midrule
\simple (Ours) & \textbf{2.35 $\pm$ 0.28}& \textbf{ 66.81 $\pm$ 7.56 }& \textbf{2.68 $\pm$ 0.06}& \textbf{ 44.78 $\pm$ 2.75 }& \textbf{2.11 $\pm$ 0.02}& \textbf{ 42.31 $\pm$ 0.61 }\\
L2X (t = 0.1) & 10.70 $\pm$ 4.82 & 30.02 $\pm$ 15.82 & 6.70 $\pm$ 0.63 & \textbf{50.39 $\pm$ 13.58} & 6.92 $\pm$ 1.61 & 32.23 $\pm$ 4.92 \\
SoftSub (t = 0.5) & \textbf{2.48 $\pm$ 0.10} & 52.86 $\pm$ 7.08 & 2.94 $\pm$ 0.08 & 39.17 $\pm$ 3.17 & 2.18 $\pm$ 0.10 & \textbf{41.98 $\pm$ 1.42} \\
I-MLE ($\tau$ = 30) & \textbf{2.51 $\pm$ 0.05} & \textbf{65.47 $\pm$ 4.95} & 2.96 $\pm$ 0.04 & 40.73 $\pm$ 3.15 & 2.38 $\pm$ 0.04 & \textbf{41.38 $\pm$ 1.55} \\
    \bottomrule
    \end{tabular}
\end{adjustbox}
\caption{Results for three aspects with $k = 10$: test MSE and subset precision, both $\times 100$}
\label{tb: l2x k=10}
\end{table}

\begin{table}[t]
\centering
\small
\begin{adjustbox}{max width=\textwidth}
    \begin{tabular}{ccccccc}
    \toprule
    \multirow{2}{*}{\textbf{Method}} & \multicolumn{2}{c}{$\bm{k = 5}$} & \multicolumn{2}{c}{$\bm{k = 10}$} & \multicolumn{2}{c}{$\bm{k = 15}$} \\
    \cmidrule(lr){2-3}\cmidrule(lr){4-5}\cmidrule(lr){6-7}
    & \textbf{Test MSE} & \textbf{Precision} & \textbf{Test MSE} & \textbf{Precision} & \textbf{Test MSE} & \textbf{Precision} \\
    \midrule
\simple (Ours) & \textbf{2.27 $\pm$ 0.05}& \textbf{ 57.30 $\pm$ 3.04 }& \textbf{2.23 $\pm$ 0.03}& \textbf{ 47.17 $\pm$ 2.11 }& 3.20 $\pm$ 0.04 & \textbf{ 53.18 $\pm$ 1.09 }\\
L2X (t = 0.1) & 5.75 $\pm$ 0.30 & 33.63 $\pm$ 6.91 & 6.68 $\pm$ 1.08 & 26.65 $\pm$ 9.39 & 7.71 $\pm$ 0.64 & 23.49 $\pm$ 10.93 \\
SoftSub (t = 0.5) & 2.57 $\pm$ 0.12 & \textbf{54.06 $\pm$ 6.29} & 2.67 $\pm$ 0.14 & 44.44 $\pm$ 2.27 & \textbf{2.52 $\pm$ 0.07} & 37.78 $\pm$ 1.71 \\
I-MLE ($\tau$ = 30) & 2.62 $\pm$ 0.05 & \textbf{54.76 $\pm$ 2.50} & 2.71 $\pm$ 0.10 & \textbf{47.98 $\pm$ 2.26} & 2.91 $\pm$ 0.18 & 39.56 $\pm$ 2.07 \\
    \bottomrule
    \end{tabular}
\end{adjustbox}
\caption{Results for aspect Aroma: test MSE and subset precision, both $\times 100$, for $k \in \{5, 10, 15\}$.}
\label{tb: l2x aroma}
\end{table}

\subsection{Sparse Linear Regression 
}
\label{sec: sparse linear regression}
Given a library of feature functions,
the task of sparse linear regression aims to learn from data which feature subset best describes the nonlinear partial differential equation (PDE) that the data are sampled from.
We propose to tackle this task by learning a $k$-subset distribution over the feature functions. During learning, we first sample from the $k$-subset distribution to decide which feature function subset to choose. With $k$ chosen features, we perform linear regression to learn the coefficients of the features from data, and then update the 
$k$-subset distribution logit parameters  by minimizing RMSE.

To test our proposed approach, we follow the experimental setting in PySINDy~\citep{desilva2020,Kaptanoglu2022}
and use the dataset collected by PySINDy where the samples are collected from the Kuramoto–Sivashinsky (KS) equation, a fourth-order nonlinear PDE known for its chaotic behavior.
This PDE takes the form $v_t = - v_{xx} - v_{xxxx} - vv_x$,
which can be seen as a linear combination of feature functions $\mathcal{V} = \{v_{xx}, v_{xxxx}, vv_x\}$ with the coefficients  all set to a value of $-1$. %
At test time, we use the MAP estimation of the learned $k$-subset distribution to choose the $k$ feature functions. For $k = 3$, our proposed method achieves the same performance as the state-of-the-art solver on this task, PySINDy. It identifies the KS PDE from data by choosing exactly the ground truth feature function subset $\mathcal{V}$, obtaining an RMSE of $0.00622$ after applying linear regression on $\mathcal{V}$.%

\section{Complexity Analysis}
In Proposition~\ref{prop: probability exactly k}, we prove that computing the marginal probability of the exactly-$k$ constraint can be done tractably in time $\bigO(nk)$.
In the context of deep learning, we often care about vectorized complexity.
We demonstrate an optimized algorithm achieving a vectorized complexity $\bigO(\log k \log n)$, assuming perfect parallelization.
The optimization is possible by computing the marginal probability in a divide-and-conquer way: 
it partitions the variables into two subsets and compute their marginals respectively such that the complexity $\bigO(n)$ is reduced to $\bigO(\log n)$; the summation over the $k$ terms also has its complexity reduced to $\bigO(\log k)$ in a similar manner.
We refer the readers to Algorithm~\ref{algo:Marginals in log} in Appendix for the optimized algorithm.
We further modify Algorithm~\ref{algo:Sample} to perform divide-and-conquer  such that  sampling $k$-subsets achieves a vectorized complexity being $\bigO(\log n)$, shown as Algorithm~\ref{algo:Sample in log} in the Appendix.
As a comparison, SoftSub~\citep{Sang2019reparameterizable} has its complexity to be $\bigO(nk)$ due to the relaxed top-k operation and its vectorized complexity to be $\bigO(k \log n)$ stemming from the fact that softmax layers need $\bigO(\log n)$ rounds of communication for normalization.
\vspace{-0.5em}
\section{Related Work}

There is a large body of work on gradient estimation for categorical random variables. \citet{maddison2017concrete,
jang2017categorical} propose the Gumbel-softmax distribution (named the concrete distribution by the former) to relax categorical random variables. For more complex distributions, such as the $k$-subset distribution which we are concerned with in this paper, existing approaches either use the straight-through and score function estimators or propose tailor-made relaxations (see for instance \cite{kim2016exact,chen2018learning,grover2019stochastic}). We directly compare to the score function and straight-through estimator as well as the tailored relaxations of ~\cite{chen2018learning,grover2019stochastic} and show that we are competitive and obtain a lower bias and/or variance than these other estimators. 
\citet{tucker2017rebar, grathwohl2017backpropagation} develop parameterized control variates based on continuous relaxations for the score-function estimator. Lastly, \citet{paulus2020gradient} offers a comprehensible work on relaxed gradient estimators, deriving several extensions of the softmax trick.  %
All of the above works, ours included, assume the independence of the selected items, beyond there being 
$k$ of them.
That is with the exception of \cite{paulus2020gradient} which make use of a relaxation using pairwise embeddings, but do not make their code available.
We leave that to future work.

A related line of work has developed and analyzed sparse variants of the softmax function,
motivated by their potential
computational and statistical advantages. 
Representative examples are \cite{blondel2020learning,peters2019sparse,correia2019adaptively,martins2016softmax}. SparseMAP~\citep{Niculae2018SparseMAPDS} has been proposed in the context of structured prediction and latent variable models, also replacing the softmax with a sparser distribution. LP-SparseMAP~\citep{Niculae2020LPSparseMAPDR} is an extension that uses a relaxation of the optimization problem rather than a MAP solver. Sparsity can also be exploited for efficient marginal inference in latent variable models~\citep{correia2020efficient}. Contrary to our work, 
they cannot control the sparsity level exactly through a $k$-subset constraint or guarantee 
a sparse output.
Also, we aim at cases where
samples in the forward pass are required.

Integrating specialized discrete algorithms into neural networks is growing in popularity. Examples are sorting algorithms \citep{cuturi2019differentiable,blondel2020fast,grover2019stochastic}, ranking \citep{rolinek2020optimizing,kool2019stochastic}, dynamic programming \citep{mensch2018differentiable,corro2019differentiable}, and solvers for combinatorial optimization problems \cite{berthet2020learning,rolinek2020deep,shirobokov2020black,niepert2021implicit,Minervini2023,zengparameter} or even probabilistic circuits over structured output spaces \citep{AhmedNeurIPS22,blondel2019structured}. There has also been work on making common programming language expression such as conditional statements, loops, and indexing differentiable through relaxations~\citep{petersen2021learning}.
\cite{xie2020differentiable} propose optimal transport to obtain differentiable sorting methods for top-$k$~classification. %

\section{Conclusion}
We introduced a gradient estimator for the $k$-subset distribution which replaces \emph{relaxed} and \emph{approximate} sampling on the forward pass with \emph{exact} sampling.
It sidesteps the non-differentiable nature of discrete sampling by estimating the gradients as a function of our distribution's marginals,
for which we prove a simple characterization, showing that we can compute them exactly and efficiently.
We demonstrated improved empirical results on a number of tasks: L2X, DVAEs, and sparse regression.

\section*{Acknowledgments}
We would like to thank all the reviewers whose insightful comments helped improve this work.
We would especially like to thank reviewers for pointing out that we can forgo the method of finite differences in favor of
computing the directional derivative, alleviating the need for $\lambda$, and enabling us to compute
exact derivative along the downstream gradient.
This work was funded in part by the DARPA Perceptually-enabled Task Guidance (PTG) Program under contract number HR00112220005, 
NSF grants \#IIS-1943641, \#IIS-1956441, \#CCF-1837129, Samsung, CISCO, a Sloan Fellowship, and by Deutsche Forschungsgemeinschaft under Germany’s Excellence Strategy - EXC 2075.
ZZ is supported by an Amazon Doctoral Student Fellowship.

\bibliography{references}
\bibliographystyle{iclr2023_conference}

\clearpage
\appendix

\section{Proofs}

\paragraph{Theorem 1.}
\textit{
Let $p_{\btheta}(\sum_j z_j=k)$ be the probability of exactly-$k$ of the unconstrained distribution parameterized by logits $\btheta$. Let $\alpha_i \coloneqq \log p_{\btheta}(z_i)$ denote the log marginals. For every variable $Z_i$, its conditional marginal is
\begin{equation} \label{thisequation?}
    p_{\btheta}\!\left(z_i \mid \textstyle\sum_j z_j=k\right)\! = \frac{\partial}{\partial \alpha_i} \log p_{\btheta}(\textstyle\sum_j z_j=k).
\end{equation}
}
\begin{proof}

We first rewrite the marginal $p_{\btheta}(\textstyle\sum_i z_i=k)$ into a summation as the probability for all possible events by definition as follows.
\begin{equation}
    p_{\btheta}(\textstyle\sum_j z_j=k)
    = \sum_{\z: \sum_j z_j=k} \prod_{j: z_j = 1} \exp(\alpha_j) \prod_{j: z_j = 0} (1 - \exp(\bar{\alpha}_j))
\end{equation}
Here we assume that the probability of $z_j = 0$ is a constant term w.r.t. parameter $\alpha_j$, i.e.,
$\frac{\partial}{\partial \alpha_j} (1 - \exp(\bar{\alpha}_j)) = 0$.\footnote{In practice, this can be easily implemented. For example, in framework $\texttt{Tensorflow}$, it can be done by setting $\texttt{tf.stop\_gradients}$.}
Further, the derivative of $p_{\btheta}(\textstyle\sum_j z_j=k)$ w.r.t. $\alpha_i$ is as follows,
\begin{align*}
    \frac{\partial}{\partial \alpha_i}
    p_{\btheta}(\textstyle\sum_j z_j=k)
    &= 
    \frac{\partial}{\partial \alpha_i}
    \sum_{\z: \sum_j z_j=k \land z_i = 1}~~ \prod_{j: z_j = 1} \exp(\alpha_j) \prod_{j: z_j = 0} (1 - \exp(\bar{\alpha}_j)) \\
    &= 
    \frac{\partial}{\partial \alpha_i} \exp(\alpha_i)
    \sum_{\z: \sum_j z_j=k \land z_i = 1}~~ \prod_{j: z_j = 1, j \neq i} \exp(\alpha_j) \prod_{j: z_j = 0} (1 - \exp(\bar{\alpha}_j)) \\
    &= 
    \exp(\alpha_i)
    \sum_{\z: \sum_j z_j=k \land z_i = 1}~~ \prod_{j: z_j = 1, j \neq i} \exp(\alpha_j) \prod_{j: z_j = 0} (1 - \exp(\bar{\alpha}_j)) \\
    &= p_{\btheta}(\textstyle\sum_j z_j=k \land z_i = 1),
\end{align*}
where the first equality holds since terms corresponding to $z_i \neq 1$ has their derivative to be zero w.r.t. $\alpha_i$.
It further holds that
\begin{align*}
    \frac{\partial}{\partial \alpha_i}
    \log p_{\btheta}(\textstyle\sum_j z_j=k)
    &= \frac{\frac{\partial}{\partial \alpha_i} p_{\btheta}(\textstyle\sum_j z_j=k)}{p_{\btheta}(\textstyle\sum_j z_j=k)} \\
    &= \frac{p_{\btheta}(\textstyle\sum_j z_j=k \land z_i = 1)}{p_{\btheta}(\textstyle\sum_j z_j=k)} \\
    &= p_{\btheta}\left(z_i \mid \textstyle\sum_j z_j=k\right)
\end{align*}
which finishes our proof.
\end{proof}

\paragraph{Proposition 3.}
\textit{
Let $\ENTROPY$ be defined as in Algorithm~\ref{algo:Entropy}. Given variables $Z_1, \cdots, Z_n$ and a $k$-subset distribution $\latentdist$ parameterized by $\logits$, Algorithm~\ref{algo:Entropy} computes entropy of $p_{\btheta}\left(\mathbf{z} \mid \sum_i z_i=k\right)$.
}
\begin{proof}
In a slight abuse of notation, let $z_n$ denote $z_n = 1$, and let $\bar{z}_n$ denote $z_n = 0$. Furthermore, we denote by $\sumnk, \sumpenk$ and $\sumpenkm$ the events $\sum_{i=0}^n = k$,  $\sum_{i=0}^{n-1} = k$ and $\sum_{i=0}^{n-1} = k-1$, respectively.\\

The entropy of the $k$-subset distribution is given by
\begin{align*}
    &H(\bZ) = -\mathbb{E}_{\z \sim p_{\btheta}(\z \mid \sumnk)} \left[\log p(\z)\right] = - \sum_{\z: \sumnk} p_{\btheta}(\z \mid \sumnk) \log p_{\btheta}(\z \mid \sumnk)
    \intertext{We start by simplifying the expression for $p_{\btheta}(\z \mid \sumnk)$, where, by the chain rule , the above is}
    &\sum_{\z: \sumnk} p_{\btheta}(\bar{z}_n \mid \sumnk) \cdot p_{\btheta}(\sumpenk \mid \sumnk, \bar{z}_n) 
    + p_{\btheta}(z_n \mid \sumnk) \cdot p_{\btheta}(\sumpenkm \mid \sumnk, z_n)\\
    \intertext{Plugging the above in the expression for the entropy, distributing the sum over the product, we get}
    &= - \sum_{\z: \sumnk} p_{\btheta}(\bar{z}_n \mid \sumnk) \cdot p_{\btheta}(\sumpenk \mid \sumnk, \bar{z}_n)\\
    &\quad \quad \quad \quad \cdot \log \left[p_{\btheta}(\bar{z}_n \mid \sumnk) \cdot p_{\btheta}(\sumpenk \mid \sumnk, \bar{z}_n)
          + p_{\btheta}(z_n \mid \sumnk) \cdot p_{\btheta}(\sumpenkm \mid \sumnk, z_n)\right]\\
    &+ p_{\btheta}(z_n \mid \sumnk) \cdot p_{\btheta}(\sumpenkm \mid \sumnk, z_n)\\
    &\quad \quad \quad \quad \cdot \log \left[p_{\btheta}(\bar{z}_n \mid \sumnk) \cdot p_{\btheta}(\sumpenk \mid \sumnk, \bar{z}_n)
          + p_{\btheta}(z_n \mid \sumnk) \cdot p_{\btheta}(\sumpenkm \mid \sumnk, z_n)\right],\\
    \intertext{where, since the two events $\bar{z}_n$ and $z_n$ are mutually exclusive, we can simplify the above to}
    - &\sum_{\z: \sumnk} p_{\btheta}(\bar{z}_n \mid \sumnk) \cdot p_{\btheta}(\sumpenk \mid \sumnk, \bar{z}_n)
    \cdot \log \left[p_{\btheta}(\bar{z}_n \mid \sumnk) \cdot p_{\btheta}(\sumpenk \mid \sumnk, \bar{z}_n)\right]\\
    &\quad + p_{\btheta}(z_n \mid \sumnk) \cdot p_{\btheta}(\sumpenkm \mid \sumnk, z_n)
    \cdot \log \left[p_{\btheta}(z_n \mid \sumnk) \cdot p_{\btheta}(\sumpenkm \mid \sumnk, z_n)\right].\\
    \intertext{Expanding the logarithms, rearranging terms, and using that conditional probabilities sum to $1$ we get}
    &p_{\btheta}(\bar{z}_n \mid \sumnk) \log p_{\btheta}(\bar{z}_n \mid \sumnk) + p_{\btheta}(z_n \mid \sumnk) \log p_{\btheta}(z_n \mid \sumnk)\\
    &+ p_{\btheta}(\bar{z}_n \mid \sumnk) \cdot p_{\btheta}(\sumpenk \mid \sumnk, \bar{z}_n) \log p_{\btheta}(\sumpenk \mid \sumnk, \bar{z}_n)\\
    &+ p_{\btheta}(z_n \mid \sumnk) \cdot p_{\btheta}(\sumpenkm \mid \sumnk, z_n) \log p_{\btheta}(\sumpenkm \mid \sumnk, z_n)\\
    &= -\mathbb{E}_{z_n \sim p_{\btheta}(z_n \mid \sumnk)}\left[-\log p_{\btheta}(z_n \mid \sumnk)\right] +\mathbb{E}_{z_n \sim p_{\btheta}(z_n \mid \sumnk)} \left[H(\bZ_{:n-1} | \sumnk, z_n) \right]\\
    &= H_b(Z_n \mid \sumnk) +\mathbb{E}_{z_n \sim p_{\btheta}(z_n \mid \sumnk)} \left[H(\bZ_{:n-1} | \sumnk, z_n) \right].
\end{align*}
That is, simply stated, the entropy of the $k$-subset distribution decomposes as the entropy of the \emph{constrained} distribution over $Z_n$, and average entropy of the distribution on the remaining variables. 

As the base case, the entropy of the $k$-subset distribution when $k = n$ is $0$; there is only one way in which to pick to choose $n$ of $n$ variables, and the $k$-subset distribution is therefore deterministic.
\end{proof}

\section{Optimized Algorithms}

Algorithm~\ref{algo:Marginals in log} is the optimized version of Algorithm~\ref{algo:Marginals}, both of which compute the marginal probability of the exactly-$k$ constraint.
Algorithm~\ref{algo:Sample in log} is the optimized version of Algorithm~\ref{algo:Sample}, both of which sample faithfully from the $k$-subset distribution.

\begin{minipage}{\dimexpr.5\textwidth-.5\columnsep}
\begin{algorithm}[H]
\begin{algorithmic}
    \State \hskip-1em\textbf{Input}: The logits $\btheta$ of the distribution, range of variable indices $[l, u]$, and the subset size $k$
    \State \hskip-1em\textbf{Output}: The exact marginal probability of variables summing up to $k$, $P(\sum_{i = l}^{u} X_i = k)$
    \State \textbf{if} $l > u$ \textbf{then return} $0$
    \State \textbf{if} $l = u$ \textbf{then return} $p_{\theta}(X_l = k)$
    \For{$m = 0$ to $k$}
    \State $p_m = \PrExactlyk(\btheta, l, \lfloor u / 2 \rfloor, m) *$\\\qquad\quad$\PrExactlyk(\btheta, \lfloor u / 2 \rfloor + 1, u, k - m)$
    \EndFor
    \State \textbf{return} $\sum_{m = 0}^k p_m$
\end{algorithmic}
\caption{$\PrExactlyk(\btheta, l, u, k)$}
\label{algo:Marginals in log}
\end{algorithm}
\end{minipage}\hfill%
\begin{minipage}{\dimexpr.5\textwidth-.5\columnsep}
\begin{algorithm}[H]
\begin{algorithmic}
    \State \hskip-1em\textbf{Input}: The logits $\btheta$ of the distribution, range of variable indices $[l, u]$, and the subset size $k$
    \State  \hskip-1em\textbf{Output}: A sample $\z = (z_1, \ldots, z_n)$ from $\latentdist$
    \State define $p(x = m) = p_m$, $m = 0, \cdots, k$\\
    \textcolor{ourspecialtextcolor}{// with $p_m$ as defined in Algorithm~\ref{algo:Marginals in log}}
    \State sample $m^*$ from $p$
    \State $\bz_{l : \lfloor u / 2 \rfloor}$ = $\SAMPLE(\btheta, l, \lfloor u / 2 \rfloor, m^*)$
    \State $\bz_{\lfloor u / 2 \rfloor + 1 : u}$ = $\SAMPLE(\btheta, \lfloor u / 2 \rfloor + 1, u, k - m^*)$
    \State \textbf{return} $\CONCAT(\bz_{l : \lfloor u / 2 \rfloor}, \bz_{\lfloor u / 2 \rfloor + 1 : u})$
\end{algorithmic}
\caption{$\SAMPLE(\btheta, l, u, k)$}
\label{algo:Sample in log}
\end{algorithm}
\end{minipage}

\section{Experimental Details}

\subsection{Synthetic Experiments}
In this experiment we analyzed the behavior of various discrete gradient estimators for the $k$-subset distribution.
We were interested in three different metrics: the bias of the the gradients estimators, the variance of the gradient estimators,
as well as the average deviation of each estimated gradient from the exact gradient.
We used cosine distance, defined as $1 - $ cosine similarity as our measure of distance, as we typically care about the direction,
not the magnitude of the gradient; the latter can be recovered using an appropriate learning rate.
Following~\citet{niepert2021implicit}, we chose a tractable $5$-subset distribution, where $n = 10$, and were therefore limited
to $\binom{10}{5}=252$ possible subsets.
We set the loss to $L(\btheta) = \E_{z\sim p_{\btheta}\left(\mathbf{z} \mid \sum_i z_i=k\right)}[\lVert \z - \mathbf{b}\rVert^2]$, where $\mathbf{b}$ is the groundtruth logits sampled from $\mathcal{N}(0,\mathbf{I})$.
We used a sample size of $10000$ to estimate each of our metrics.

\subsection{Discrete Variational Auto-Encoder}

We tested our \simple gradient estimator in the discrete $k$-subset Variational Auto-Encoder (VAE) setting, where the latent variables model a probability distribution over $k$-subsets, and has a dimensionality of $20$.
The experimental setup is similar to those used in prior work on the Gumbel softmax tricks~\citep{jang2017categorical} and IMLE~\citep{niepert2021implicit}
The encoding and decoding functions of the VAE consist of three dense layers (encoding: 512-256-20x20; decoding: 256-512-784).
As is commonplace in discrete VAEs, the loss is the sum of the reconstruction loss (binary cross-entropy loss on output pixels) and KL divergence of the $k-$subset distribution and the uniform distribution, known as the evidence lower bound, or the ELBO.
The task being to learn a \emph{sparse} generative model of MNIST.
As in prior work, we use a batch size of $100$ and train for $100$ epochs, plotting the test loss after each epoch.
We use the standard Adam settings in Tensorflow 2.x, and do not employ any learning rate scheduling.
The encoder network consists of an input layer with dimension $784$ (we flatten the images), a dense layer with dimension $512$ and ReLu activation, a dense layer with dimension $256$ and ReLu activation, and a dense layer with dimension $400 (20 \times 20)$ which outputs $\btheta$ and no non-linearity
\simple takes $\btheta$ as input and outputs a discrete latent code of size $20 \times 20$.
The decoder network, which takes this discrete latent code as input, consists of a dense layer with dimension $256$ and ReLu activation, a dense layer with dimension $512$ and ReLu activation, and finally a dense layer with dimension $784$ returning the logits for the output pixels.
Sigmoids are applied to these logits and the binary cross-entropy loss is computed.
To obtain the best performing model of each of the compared methods, we performed a grid search over the learning rate in the range $[1\times10^{-3}, 5\times10^{-4}]$, $\lambda$ in the range $[1\times10^{-3}, 1\times10^{-2}, 1\times10^{-1}, 1\times10^0, 1\times10^1, 1\times10^2, 1\times10^3]$, and for SoG I-MLE, the temparature $\tau$ in the range $[1\times10^{-1}, 1\times10^0, 1\times10^1, 1\times10^2]$

We will now present a formal proof on how to compute the KL-divergence between the $k$-subset distribution and a uniform distribution tractably and exactly.

\begin{proposition}
Let $p_{\btheta}\left(\mathbf{z} \mid \sum_i z_i=k\right)$ be a $k$-subset distribution parameterized by $\btheta$ and $\mathcal{U}(\z)$ be a uniform distribution on the constrained space $\mathcal{C} = \{\z \mid \sum_i z_i=k \}$.
Then the KL-divergence between distribution $p_{\btheta}\left(\mathbf{z} \mid \sum_i z_i=k\right)$ and $\mathcal{U}(\z)$ can be computed by
\begin{equation*}
    \kld{p_{\btheta}(\mathbf{z} \mid \sum_i z_i=k)}{\mathcal{U}(\z)} = -H(\z) + \log \binom{n}{k},
\end{equation*}
where $H$ denote the entropy of distribution $p_{\btheta}\left(\mathbf{z} \mid \sum_i z_i=k\right)$.
\end{proposition}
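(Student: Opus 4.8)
The plan is to compute the KL-divergence directly from its definition and exploit the fact that the uniform distribution $\mathcal{U}(\z)$ assigns the constant probability $1/\binom{n}{k}$ to every $\z$ in the constrained space $\mathcal{C}$, since there are exactly $\binom{n}{k}$ binary vectors of length $n$ with Hamming weight $k$. First I would write
\begin{align*}
    \kld{p_{\btheta}(\mathbf{z} \mid \textstyle\sum_i z_i=k)}{\mathcal{U}(\z)}
    &= \sum_{\z \in \mathcal{C}} p_{\btheta}\!\left(\z \mid \textstyle\sum_i z_i=k\right) \log \frac{p_{\btheta}\!\left(\z \mid \textstyle\sum_i z_i=k\right)}{\mathcal{U}(\z)}.
\end{align*}
Both distributions are supported exactly on $\mathcal{C}$, so the sum is well-defined with no support mismatch.

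Next I would substitute $\mathcal{U}(\z) = 1/\binom{n}{k}$ for every $\z \in \mathcal{C}$ and split the logarithm:
\begin{align*}
    \kld{p_{\btheta}(\mathbf{z} \mid \textstyle\sum_i z_i=k)}{\mathcal{U}(\z)}
    &= \sum_{\z \in \mathcal{C}} p_{\btheta}\!\left(\z \mid \textstyle\sum_i z_i=k\right) \log p_{\btheta}\!\left(\z \mid \textstyle\sum_i z_i=k\right)
       + \log \binom{n}{k} \sum_{\z \in \mathcal{C}} p_{\btheta}\!\left(\z \mid \textstyle\sum_i z_i=k\right).
\end{align*}
The first term is exactly $-H(\z)$ by the definition of entropy, and the second sum equals $1$ because $p_{\btheta}(\cdot \mid \sum_i z_i=k)$ is a probability distribution on $\mathcal{C}$. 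This immediately yields $-H(\z) + \log\binom{n}{k}$, completing the derivation.

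There is essentially no hard obstacle here: the argument is a one-line algebraic manipulation once the cardinality $|\mathcal{C}| = \binom{n}{k}$ is noted. The only point requiring a word of care is confirming that the supports of the two distributions coincide (both are $\mathcal{C}$), so that the KL-divergence is finite and the logarithm of the ratio is defined everywhere the summand is nonzero; this is immediate since $p_{\btheta}(\z \mid \sum_i z_i=k) = 0$ outside $\mathcal{C}$ and $\mathcal{U}$ is strictly positive on $\mathcal{C}$. Combined with \cref{prop:entropy}, which shows $H(\z)$ is computable by \cref{algo:Entropy} in time $\bigO(nk)$, this shows the KL-divergence is tractable, since $\log\binom{n}{k}$ is a constant independent of $\btheta$.
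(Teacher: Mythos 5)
Your proof is correct and follows essentially the same route as the paper's: expand the KL-divergence by definition, substitute $\mathcal{U}(\z) \equiv 1/\binom{n}{k}$, and split the logarithm so the first term is $-H(\z)$ and the second is $\log\binom{n}{k}$. The additional remarks on support matching and tractability via \cref{prop:entropy} are fine but not needed beyond what the paper already argues.
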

\begin{proof}
By the definition of KL divergence, it holds that
\begin{align*}
    &\kld{p_{\btheta}(\mathbf{z} \mid \sum_i z_i=k)}{\mathcal{U}(\z)}\\
    &= \sum_{\z \in \mathcal{C}} p_{\btheta}(\mathbf{z} \mid \sum_i z_i=k) \cdot \log \frac{p_{\btheta}(\mathbf{z} \mid \sum_i z_i=k)}{U(\z)} \\
    &= (\sum_{\z \in \mathcal{C}} p_{\btheta}(\mathbf{z} \mid \sum_i z_i=k)~\log p_{\btheta}(\mathbf{z} \mid \sum_i z_i=k)) - \sum_{\z \in \mathcal{C}} p_{\btheta}(\mathbf{z} \mid \sum_i z_i=k) \log U(\z) \\
    &= -H(\z) + \log \binom{n}{k}.
\end{align*}
The last equality holds since $U(\z) \equiv 1 / \binom{n}{k}$.
\end{proof}

\subsection{Learning to Explain}
The \textsc{BeerAdvocate} dataset~\citep{McAuley2012LearningAA} consists of free-text reviews and ratings for $4$ different aspects of beer: appearance, aroma, palate, and taste.
The training set has $80$k reviews for the aspect \textsc{Appearance} and $70$k reviews for all other aspects.
The maximum review length is $350$ tokens.
We follow~\citet{niepert2021implicit} in computing $10$ different evenly sized validation/test splits of the 10k held out set and compute mean and standard deviation over $10$ models, each trained on one split.
In addition to the ratings for all reviews, each sentence in the test set contains annotations of the words that best describe the review score with respect to the various aspects.
Following the experimental setup of recent work~\citep{paulus2020gradient,niepert2021implicit}, we address the problem introduced by the L2X paper~\citep{chen2018learning} of learning a $k$-subset distribution over words that best explain a given aspect rating.
Subset precision was computed using a set of $993$ annotated reviews.
We use pre-trained word embeddings from~\cite{Lei2016RationalizingNP}\footnote{http://people.csail.mit.edu/taolei/beer/.}
We use the standard neural network architecture from prior work~\citet{chen2018learning, paulus2020gradient} with $4$ convolutional and one dense layer.
This neural network outputs the parameters $\btheta$ of the $k$-subset distribution over $k$-hot binary latent masks with $k \in \{5, 10, 15\}$.
We train for 20 epochs using the standard Adam settings in Tensorflow 2.x, and no learning rate schedule.
We always evaluate the model with the best validation MSE among the 20 epochs.

\end{document}